\documentclass[11pt,a4paper]{article}

\usepackage[top=2.5cm,bottom=2.5cm,left=2.8cm,right=2.8cm]{geometry}
\usepackage[protrusion=false,expansion=false]{microtype}
\usepackage{setspace}
\usepackage[T1]{fontenc}
\usepackage[utf8]{inputenc}

\usepackage{amsmath,amssymb,amsthm}

\usepackage{booktabs}
\usepackage{tabularx}
\usepackage{array}
\usepackage{multirow}

\usepackage{listings}
\usepackage[svgnames]{xcolor}

\lstdefinestyle{prologstyle}{
  language=Prolog,
  basicstyle=\ttfamily\small,
  keywordstyle=\color{Navy}\bfseries,
  commentstyle=\color{ForestGreen},
  stringstyle=\color{Maroon},
  backgroundcolor=\color{AliceBlue},
  frame=single,
  framerule=0.4pt,
  rulecolor=\color{SteelBlue!50},
  framesep=5pt,
  xleftmargin=8pt,
  xrightmargin=4pt,
  breaklines=true,
  showstringspaces=false,
  tabsize=2,
  captionpos=b,
  numbers=none,
}

\lstdefinestyle{sparqlstyle}{
  basicstyle=\ttfamily\small,
  keywordstyle=\color{Navy}\bfseries,
  commentstyle=\color{ForestGreen},
  backgroundcolor=\color{AliceBlue},
  frame=single,
  framerule=0.4pt,
  rulecolor=\color{SteelBlue!50},
  framesep=5pt,
  xleftmargin=8pt,
  xrightmargin=4pt,
  breaklines=true,
  showstringspaces=false,
  tabsize=2,
  numbers=none,
}

\lstdefinestyle{genericstyle}{
  basicstyle=\ttfamily\small,
  backgroundcolor=\color{AliceBlue},
  frame=single,
  framerule=0.4pt,
  rulecolor=\color{SteelBlue!50},
  framesep=5pt,
  xleftmargin=8pt,
  xrightmargin=4pt,
  breaklines=true,
  showstringspaces=false,
}

\theoremstyle{definition}
\newtheorem{definition}{Definition}[section]

\theoremstyle{plain}
\newtheorem{theorem}{Theorem}[section]
\newtheorem{property}{Property}[section]

\theoremstyle{remark}

\usepackage{graphicx}
\usepackage{float}

\usepackage[colorlinks=true,
            linkcolor=NavyBlue,
            citecolor=DarkGreen,
            urlcolor=DarkBlue]{hyperref}

\usepackage{enumitem}
\setlist[itemize]{itemsep=2pt,topsep=4pt}
\setlist[enumerate]{itemsep=2pt,topsep=4pt}
\usepackage{xspace}
\usepackage{parskip}
\usepackage{titlesec}
\titleformat{\section}{\large\bfseries}{\thesection.}{0.6em}{}
\titleformat{\subsection}{\normalsize\bfseries}{\thesubsection}{0.6em}{}
\titleformat{\subsubsection}{\normalsize\itshape}{\thesubsubsection}{0.6em}{}
\titlespacing{\section}{0pt}{12pt}{6pt}
\titlespacing{\subsection}{0pt}{8pt}{4pt}

\usepackage{tcolorbox}
\tcbuselibrary{skins}
\newcommand{\chainnode}[2]{%
  \begin{tcolorbox}[
    enhanced, boxrule=0.4pt, arc=3pt,
    colback=SteelBlue!8, colframe=SteelBlue!50,
    left=4pt, right=4pt, top=2pt, bottom=2pt,
    width=\linewidth, nobeforeafter
  ]
  \centering\small
  \textbf{\texttt{#1}}\\[1pt]
  \textit{\footnotesize #2}
  \end{tcolorbox}%
}
\newcommand{\chainarrow}[1]{%
  \vspace{1pt}%
  \begin{center}
    {\color{SteelBlue!80}$\downarrow$}\enspace
    {\footnotesize\itshape\color{gray!80!black}#1}
  \end{center}%
  \vspace{1pt}%
}

\usepackage{caption}
\begin{document}

\begin{center}
  {\LARGE\bfseries Domain-Constrained Knowledge Representation:\\ A Modal Framework}

  \vspace{1.2em}

  {\normalsize
    Chao Li\textsuperscript{1}\quad
    Yuru Wang\textsuperscript{2}\quad
    Chunyi Zhao\textsuperscript{3}
  }

  \vspace{0.5em}
  {\small
    \textsuperscript{1}Deepleap.ai \quad \texttt{lichao@deepleap.ai}\\[2pt]
    \textsuperscript{2}School of Information Science and Technology,
      Northeast Normal University \quad \texttt{wangyr915@nenu.edu.cn}\\[2pt]
    \textsuperscript{3}Centre of Educational Design and Learning,
      University of Otago \quad \texttt{cccchunyi07@gmail.com}
  }
\end{center}

\vspace{1em}
\hrule
\vspace{0.8em}

\begin{abstract}
Knowledge graphs store large numbers of relations efficiently, but they remain
weak at representing a quieter difficulty: the meaning of a concept often shifts
with the domain in which it is used. A triple such as $\langle\text{Apple},
\text{instance-of}, \text{Company}\rangle$ may be acceptable in one setting
while being misleading or unusable in another. In most current systems, domain
information is attached as metadata, qualifiers, or graph-level organization.
These mechanisms help with filtering and provenance, but they usually do not
alter the formal status of the assertion itself.

This paper argues that domain should be treated as part of knowledge
representation rather than as supplementary annotation. It introduces the
\emph{Domain-Contextualized Concept Graph} (CDC), a framework in which domain
is written into the relation and interpreted as a modal world constraint. In the
CDC form $\langle C,\, R@D,\, C'\rangle$, the marker $@D$ identifies the world
in which the relation is licensed to hold. Formally, the relation is interpreted
through a domain-indexed necessity operator $\Box_D$, so that truth, inference,
and conflict checking are all scoped to the relevant world.

This move has three immediate consequences. First, ambiguous concepts can be
disambiguated at the point of representation rather than deferred to later
processing. Second, invalid or ill-formed assertions can be challenged against
the domain they invoke. Third, structurally comparable relations across domains
can be connected through explicit cross-domain predicates rather than ad hoc
alignment alone. The paper develops this claim through a Kripke-style semantics,
a compact predicate system, a Prolog reference implementation, and mappings to
RDF stores, OWL ontologies, and relational databases.

The contribution of the paper is therefore not another context tag, but a
representational reinterpretation of domain itself. The central claim is that
many practical failures in knowledge systems begin when domain is treated as
external to the assertion. CDC addresses that problem by giving domain a
structural and computable role inside the representation.

\medskip\noindent\textbf{Keywords:} Knowledge representation,
Modal constraints, Context-aware reasoning, Cognitive modeling, Logic
programming.
\end{abstract}

\vspace{0.8em}
\hrule
\vspace{1em}

\tableofcontents

\newpage

\section{Introduction}

\subsection{The Problem: Three Fields, One Unresolved Assumption}

Knowledge graphs have transformed the storage and retrieval of structured
information~\cite{hogan2021}. Systems such as Google's Knowledge Graph,
DBpedia~\cite{auer2007}, and Wikidata~\cite{vrandecic2014} show the practical
power of triples and graph-based schemas at scale. Yet the efficiency of the
triple model conceals a persistent weakness: a triple says nothing about the
world in which it is supposed to be interpreted.

When DBpedia records $\langle\text{Apple}, \text{instance-of},
\text{Company}\rangle$, a human reader may infer the intended domain without
difficulty. The formal structure cannot enforce that reading, cannot determine
when Apple should be treated as a fruit, and cannot prevent one reading from
contaminating another. This is not missing information; it is missing
structure. The representation does not encode the conditions under which meaning
holds.

The same difficulty appears wherever concepts cross domains. A researcher
comparing artificial neural networks with biological neural systems encounters a
representational gap: in one ontology a neural network is an algorithmic model,
while in another the brain is a biological organ. The structural relation may be
intellectually productive, but the knowledge graph does not know how to preserve
it without manual alignment~\cite{euzenat2013}. Historical concepts create the
same pressure. Dalton, Thomson, Bohr, and contemporary physics do not simply add
detail to one unchanged object; they reframe what ``atom'' means. Standard
ontologies address this by replacing definitions or splitting the concept into
separate entities such as \texttt{Atom\_Dalton} and \texttt{Atom\_Bohr}~\cite{flouris2008}. What is
lost is the ability to represent continuity and difference together.

The problem extends beyond knowledge graphs. OWL and description logics adopt an
open-world assumption with unary semantics: a class assertion holds globally,
with no native construct for expressing that an assertion is valid only within a
particular context~\cite{baader2007}. This is not a deficiency of OWL; it is its
design premise: serve ontological consistency and decidable reasoning, not
context-sensitive knowledge. But in practice, it becomes a hard constraint
against alignment with complex domains such as medicine, engineering, and law.
The limitation has motivated two decades of extensions---contextualized
ontologies~\cite{bozzato2018}, modular ontologies~\cite{grau2008}, and named
graphs---none of which makes context a structural parameter of the inference unit
itself.

ICD-11, the WHO's global standard for health information ($\sim$85\,000
entities), manifests the same problem at industrial scale. A single disease must
be classified simultaneously along anatomical, etiological, pathological, and
functional axes. The WHO's own technical documentation acknowledges that the
ontology layer ``was not completed and remains an opportunity for future
development''~\cite{chute2022}. The root cause: OWL's \texttt{subClassOf}
relation does not distinguish \emph{why} an entity belongs to a parent class.
Streptococcal pneumonia is simultaneously a respiratory disease (anatomical
axis), a bacterial infection (etiological axis), and a streptococcal disease
(pathogen axis). But the formalism treats these as three undifferentiated
parent-class memberships. Multiple inheritance becomes ambiguous polyhierarchy,
with no mechanism to scope classification to a specific axis.

\medskip\noindent These are not three separate problems. They are the same
problem in three settings.

\subsection{The Shared Root Cause}

Knowledge graphs, OWL ontologies, and medical classification systems share an
assumption so foundational that it has rarely been explicitly stated:
\emph{inference relations are global}. The relation \texttt{is\_a} has one
meaning everywhere. The domain in which an assertion holds---the semantic world
that licenses the relation---lives outside the representation: in
query-language \texttt{WHERE} clauses, in application code, and in the human
user's head. The inference engine does not know which world it is operating in.

A common response is to attach more context: qualifiers, metadata, named graphs,
provenance records. These devices are useful---they improve documentation,
support filtering, and help organize data. But in most systems the core assertion
remains formally untouched. The triple still stands as if it were globally valid,
while the domain appears alongside it as supplementary explanation.

The distinction this paper develops is therefore the distinction between a
\emph{label} and a \emph{constraint}. A label indicates how an assertion is
intended to be read. A constraint determines whether the assertion is licensed to
stand at all. Domain, we argue, belongs to the second category. If domain is
built into the relation as a structural constraint---as part of the predicate
arity, not as external annotation---then it becomes part of what the
representation means and part of how subsequent inference must proceed. The
system cannot ignore it, just as a function call cannot ignore one of its
arguments.

\subsection{Theoretical Foundations}

CDC is grounded in two complementary traditions.

\medskip\noindent\textbf{Cognitive-linguistic foundation.} Humans do not store
concepts as universal definitions; instead, we understand them through
domain-dependent frames~\cite{fillmore1982}: ``bank'' in finance activates a
frame of money, accounts, and transactions, while ``bank'' in geography activates
a frame of rivers, erosion, and terrain. This is not lexical ambiguity; it is
how cognition works~\cite{barsalou1982,gardenfors2000}. CDC operationalizes this
through a three-level isomorphic mapping between cognitive frames, linguistic
context markers, and computational domain specifications~\cite{goldberg2006,lakoff1980}.

\medskip\noindent\textbf{Modal logic foundation.} Since Kripke~\cite{kripke1963},
the insight that propositions can hold relative to possible worlds has been
central to logic. The necessity operator $\Box$ asserts that a proposition holds
in all accessible worlds. McCarthy~\cite{mccarthy1993} proposed formalizing
contexts as first-class logical objects. Fagin et al.~\cite{fagin2004}
demonstrated that modal frameworks can capture epistemic states in multi-agent
systems. But within knowledge representation (KR), these modal approaches have
remained theoretical; no prior work has embedded modal world constraints directly
into the representational layer of a knowledge graph as computable structural
elements. CDC fills this gap.

\subsection{Contributions}

This paper makes four contributions, each demonstrated through concrete examples
before being formalized.

\begin{enumerate}[leftmargin=*]
  \item \textbf{Modal constraint interpretation.} We show that CDC's domain
  annotation $@D$ functions as a modal necessity operator $\Box_D$---not merely
  analogously, but with formal consequences for what can be asserted, inferred,
  and rejected within a knowledge representation.

  \item \textbf{Representation-level power.} We demonstrate that $@D$ enables
  capabilities at the representational layer that traditional KR cannot achieve:
  falsification of ill-formed assertions, a priori disambiguation, cross-domain
  structural transfer, and multi-path constraint locking.

  \item \textbf{Constraint transferability.} We formalize the property that any
  computational system consuming CDC representations automatically inherits the
  semantic constraints of $@D$, making the representation an active constraint
  interface rather than passive data.

  \item \textbf{Computational validation.} We provide a Prolog reference
  implementation, side-by-side query comparison with RDF and Wikidata
  demonstrating CDC's structural advantages, and case studies---including ICD-11
  respiratory diseases---validating the framework's computability and practical
  value.
\end{enumerate}

This paper focuses on the representational foundations of CDC: what $@D$ means,
why it constitutes a constraint rather than a label, and what representational
properties follow from this interpretation. Questions of computational
complexity, decidability guarantees, and optimized inference algorithms over CDC
structures---while important---constitute a distinct line of investigation that
builds upon the representational semantics established here.

\section{Related Work}

\subsection{Knowledge Graph Systems}

DBpedia~\cite{auer2007} and Freebase~\cite{bollacker2008} established the
paradigm of fixed ontological schemas. Wikidata~\cite{vrandecic2014} introduced
qualifiers---temporal or situational annotations attached to statements. But
qualifiers do not change the logical status of assertions; they are metadata
\emph{about} statements, not structural constraints \emph{within} them.
Schema.org~\cite{guha2016} provides web-optimized structured vocabulary but is
designed for search-engine interoperability, not general-purpose knowledge
representation.

\subsection{Ontology Engineering}

Upper ontologies such as SUMO~\cite{niles2001}, Cyc~\cite{lenat1995}, and
BFO~\cite{arp2015} seek universal categories applicable across all domains.
They provide formal rigor but sacrifice adaptability: when a concept must mean
different things in different disciplines, universal categories become a
straitjacket. Domain-specific ontologies, such as SNOMED CT and Gene
Ontology~\cite{smith2007}, deliver rich internal structure but resist
cross-disciplinary integration. CDC positions itself as a cross-domain
integration layer. It does not replace domain ontologies, but connects them
through explicit modal constraints.

\subsection{Cognitive Science Foundations}

G{\"a}rdenfors' conceptual spaces~\cite{gardenfors2000} model concepts as
regions in multidimensional quality spaces. Fillmore's frame
semantics~\cite{fillmore1982} proposes that meanings are interpreted relative to
conceptual frames. Barsalou~\cite{barsalou1982} and Medin \&
Shoben~\cite{medin1988} demonstrate that human cognition systematically adapts
categorization to situational context. CDC operationalizes these insights: each
domain specification defines a cognitive frame within which concepts acquire
specific meanings.

\subsection{Logic-Based Representation}

Description Logic (DL) provides the semantic foundation of OWL
ontologies~\cite{baader2003,horrocks2003} and emphasizes decidable reasoning
under well-defined constraints. CDC's $\Box_D$ operator is formally compatible
with DL's existing modal extensions. Epistemic DLs already incorporate modal
operators for knowledge and belief, such as the epistemic operator $K$ in
DL-Lite, and temporal DLs use modal operators for time-indexed assertions.
CDC's contribution is not a new logical formalism but a specific
operationalization that treats the domain specification as a Kripke-style world
index within an otherwise standard relational structure. A CDC assertion
$\mathtt{is\_a}(C, C', D)$ can be read as a DL axiom $C \sqsubseteq_D C'$, a
subsumption that holds within world $D$. This compatibility means CDC does not
require abandoning DL-based infrastructure; it extends that infrastructure with
explicit world indexing.

RDF named graphs~\cite{carroll2005} extend triples to quads for provenance
tracking and statement grouping. RDF-star, in the forthcoming RDF 1.2, enables
annotations on individual triples through quoted triple syntax. RDF-star can
represent the same surface structure as CDC---a triple annotated with a domain
specification. The distinction is semantic rather than syntactic: RDF-star
annotations carry no formal semantics that participate in inference. Transitive
closure, attribute inheritance, and prerequisite chains do not automatically
respect quoted-triple annotations unless each inference rule is manually extended
to check them. CDC's $@D$ carries modal necessity semantics, $\Box_D$, that
structurally constrain reasoning. The same notation, without the modal
interpretation, is a label; with it, it is a constraint.

\subsection{Modal Logic in Knowledge Representation}

Kripke's possible-world semantics~\cite{kripke1963} provides the framework for
necessity and possibility relative to accessible worlds.
McCarthy~\cite{mccarthy1993} proposed contexts as formal objects. Situation
calculus~\cite{mccarthy1969} models how facts change across situations. Fagin et
al.~\cite{fagin2004} demonstrated modal reasoning about knowledge in multi-agent
systems. These approaches share CDC's concern with context-dependent truth. But
they have remained theoretical tools used to \emph{analyze} knowledge
representations, not to \emph{constitute} them. Description Logics incorporated
limited modal features (temporal extensions), but no prior work has made modal
world constraints a first-class, computable component of the knowledge graph
representation itself. This is CDC's distinctive contribution: $@D$ is not a
theoretical analysis tool applied to representations---it \emph{is part of} the
representation.

Table~\ref{tab:related} summarises CDC's relationship to prior work.

\begin{table}[ht]
\centering
\caption{Positioning CDC relative to existing approaches.}
\label{tab:related}
\small
\begin{tabularx}{\linewidth}{lllX}
\toprule
\textbf{Approach} & \textbf{Layer} & \textbf{CDC's Relationship} \\
\midrule
DBpedia / Freebase   & Implementation    & Provides data substrate \\
Wikidata             & Implementation    & Extends with domain-scoped reasoning \\
Upper ontologies     & Formal semantics  & More flexible philosophy \\
Domain ontologies    & Specialized       & Cross-domain integration layer \\
Conceptual spaces    & Geometric theory  & Cognitive inspiration \\
Frame semantics      & Linguistic theory & Theoretical grounding \\
Description logic    & Logical formalism & Trades decidability for expressiveness \\
Named graphs         & Technical         & Reinterprets as semantic context \\
Modal logic          & Formal theory     & Operationalizes as computable constraint \\
\bottomrule
\end{tabularx}
\end{table}

\subsection{Context-Aware Knowledge Representation}

The absence of a formal solution has not prevented deployment. It has produced a
catalog of workarounds, each addressing a symptom while leaving the root cause
intact. ICD-11's Foundation Layer permits multiple parent classes without
distinguishing which classification axis each parent represents---users are
expected to filter manually. Postcoordination encodes multi-axis classification
as string concatenation of base codes and qualifier codes, producing
combinatorial complexity that users must navigate by memorizing coding rules.
Hospital EMR systems hard-code axis logic in application layers---``respiratory
department queries return only anatomical classifications''---producing solutions
that work but cannot be reused, adapted, or maintained when classification axes
change. SWRL rules patch OWL reasoning with external logic, coupling tightly to
the ontology version and becoming unmaintainable at scale. And in many cases,
the multi-axis semantics are simply written in natural language documentation:
human-readable, machine-opaque.

These workarounds share a pattern. Each moves the domain information further
from the inference engine: from the representation (where it could constrain
reasoning) to application code, to coding conventions, to documentation, to the
clinician's memory. The result is that knowledge systems operate globally while
humans perform domain scoping manually---precisely the separation this paper
addresses.

\section{The CDC Structure}

\subsection{Formal Definition}

\begin{definition}[CDC Triple]
A CDC triple is a four-tuple:
\[
  \tau = \langle c,\; r,\; c',\; d \rangle
\]
where $c \in \mathcal{C}$ (source concept), $r \in \mathcal{R}$ (relation
predicate), $c' \in \mathcal{C}$ (target concept), and $d \in \mathcal{D}$
(domain specification).

\smallskip\noindent\emph{Notation:}
$c \xrightarrow{r@d} c'$,\quad or equivalently $r(c,\, c',\, d)$.
\end{definition}

\begin{definition}[Domain Specification]
A domain specification $d \in \mathcal{D}$ is a structured string:
\[
  d \;:=\; \mathit{dimension} \;\mid\; \mathit{dimension}@d
\]
\emph{Examples:}
\begin{itemize}[nosep,leftmargin=1.5em]
  \item \texttt{'Physics'}
  \item \texttt{'Physics@Quantum\_Mechanics'}
  \item \texttt{'HighSchool@Math@Calculus'}
  \item \texttt{'Student\_Zhang@Grade10'}
\end{itemize}

\smallskip\noindent\emph{Critical property:} Domains are defined on demand, not
from a fixed taxonomy.
\end{definition}

\subsection{Domain Patterns}

Table~\ref{tab:patterns} illustrates common domain specification patterns across
use cases.

\begin{table}[ht]
\centering
\caption{Domain specification patterns by use case.}
\label{tab:patterns}
\small
\begin{tabular}{lll}
\toprule
\textbf{Use Case} & \textbf{Pattern} & \textbf{Example} \\
\midrule
Academic Research & \texttt{Discipline@Theory}          & \texttt{'Psychology@Behaviorism'} \\
Education         & \texttt{Grade@Subject@Topic}        & \texttt{'HighSchool@Chemistry@Organic'} \\
Enterprise        & \texttt{Department@Project}         & \texttt{'Engineering@ProductA@Testing'} \\
Historical        & \texttt{Era@Region@Movement}        & \texttt{'Renaissance@Italy@Humanism'} \\
Technical Docs    & \texttt{Stack@Version@Context}      & \texttt{'React@18.x@Mobile\_Apps'} \\
Personal Learning & \texttt{Individual@Background}      & \texttt{'Student\_Li@CS\_Major'} \\
\bottomrule
\end{tabular}
\end{table}

\noindent\textbf{Design principles:} \emph{Specificity} (fine-grained enough to
disambiguate, not so fine as to fragment); \emph{Consistency} (internally
coherent patterns); \emph{Compositionality} (hierarchical where natural);
\emph{Context-Sensitivity} (capture the most relevant distinguishing context).

\subsection{Representation in Practice}

CDC allows divergent categorisations to coexist without contradiction:

\begin{lstlisting}[style=prologstyle]
is_a(Apple, Fruit,                'Biology@Plant_Taxonomy').
is_a(Apple, Company,              'Business@Technology_Industry').
is_a(Neural_Network, Computational_Model, 'CS@ML').
analogous_to(Neural_Network, Biological_Brain, 'CS@ML', 'Biology@Neuroscience').
is_a(Neural_Network, Function_Approximator, 'Math@Optimization').
is_a(Neural_Network, Philosophy_Topic,      'Philosophy@Mind').
\end{lstlisting}

This is the surface structure. Section~\ref{sec:power} reveals what makes it
powerful.

\subsection{Migration Path: From Existing Representations to CDC}

CDC does not require rebuilding knowledge from scratch. Any existing knowledge
representation---whether a traditional knowledge graph (RDF triples), an OWL
ontology, a relational database, or a structured document---can be migrated to
CDC through a straightforward transformation:

\begin{itemize}
  \item Existing entities become concepts $\mathcal{C}$.
  \item Existing relations become relation predicates $r$.
  \item Existing namespaces, categories, or contextual tags become domain
        specifications $\mathcal{D}$.
\end{itemize}

\noindent The mapping is direct:
\begin{center}
\begin{tabular}{lp{0.62\linewidth}}
\toprule
Traditional triple: & $\langle\text{Apple},\;\text{instance-of},\;\text{Company}\rangle$ \\[4pt]
CDC triple:         & $\langle\text{Apple},\;\text{is\_a},\;\text{Company},$\newline
                    \quad\texttt{'Business@Technology\_Industry'}$\rangle$ \\
\bottomrule
\end{tabular}
\end{center}

Hierarchical domains (e.g., \texttt{'Physics@Quantum\_Mechanics'}) can be
constructed from existing taxonomy structures, category hierarchies, or namespace
prefixes. Cross-domain relations like \texttt{analogous\_to} can be added
incrementally without disrupting existing data.

This migration path means CDC is an \emph{augmentation layer} that wraps
existing representations with modal world constraints, not a replacement that
requires discarding legacy knowledge. An RDF triple store can be converted to CDC
by mapping named graphs to domain specifications. An OWL ontology can be wrapped
by treating its namespace as a domain. A relational database can gain CDC
semantics by adding a single domain column to its fact tables.

CDC is therefore backward-compatible by construction: the worst case of ignoring
$@D$ is exactly the status quo.

\section{The Power of \textit{@D}: Domain as Modal World Constraint}
\label{sec:power}

This is the core of the paper. We do not merely define CDC's modal
interpretation---we demonstrate its power through concrete examples, then
formalise what makes that power possible.

\subsection{\textit{@D} Is a Constraint, Not a Label}

In standard Kripke semantics~\cite{kripke1963,chellas1980}, a modal frame is a
tuple $\langle W, R_{\mathit{acc}}\rangle$ where $W$ is a set of possible worlds
and $R_{\mathit{acc}}$ is an accessibility relation. The necessity operator
$\Box$ asserts that a proposition holds in all accessible worlds. CDC
operationalises this framework:

\begin{definition}[Modal Domain Interpretation]
\label{def:modal}
Each domain specification $d \in \mathcal{D}$ defines a possible world $w_d \in
W$. The CDC assertion
\[
  R@D(C,\,C') \;\equiv\; \Box_D\, R(C,\,C')
\]
states that relation $R$ between $C$ and $C'$ necessarily holds within the world
defined by domain $D$.
\end{definition}

This is not a notational convenience. It has teeth. Consider what happens when a
relation violates the laws of its declared world.

\medskip\noindent\textbf{Demonstration: Representation-level falsification.}
Suppose someone asserts:
\begin{lstlisting}[style=prologstyle]
causes(Thunder, Dark_Clouds, 'Meteorology')
\end{lstlisting}
In a traditional knowledge graph, this triple is syntactically valid and will be
stored without objection. No structural mechanism prevents the encoding of a
reversed causal claim.

In CDC, the \texttt{@Meteorology} annotation is not a passive tag---it is
$\Box_{\text{Meteorology}}$, a necessity operator that imports the constraints of
the meteorological world. Within $\Box_{\text{Meteorology}}$, the established
causal chain is:

\begin{center}\small
\begin{tabular}{c}
$\text{Moisture\_Accumulation}$\\[2pt]
{\color{gray}$\downarrow$}\enspace{\itshape causes@Met}\\[2pt]
$\text{Cloud\_Formation}$\\[2pt]
{\color{gray}$\downarrow$}\enspace{\itshape causes@Met}\\[2pt]
$\text{Charge\_Separation}$\\[2pt]
{\color{gray}$\downarrow$}\enspace{\itshape causes@Met}\\[2pt]
$\text{Lightning}$\\[2pt]
{\color{gray}$\downarrow$}\enspace{\itshape accompanies@Met}\\[2pt]
$\text{Thunder}$
\end{tabular}
\end{center}
The proposed assertion reverses this chain. Under $\Box_{\text{Meteorology}}$,
the causal direction from Thunder to Dark\_Clouds is inconsistent with the
world's constraint structure. The assertion is rejected---not by a separate rule
engine, not by a human reviewer, but by the modal semantics of the representation
itself.

This is the difference between a label and a constraint. A label says ``this
fact is about meteorology.'' A constraint says ``this fact must be consistent
with meteorology---and it isn't.''

No traditional knowledge graph---including Wikidata with qualifiers and RDF with
named graphs---can achieve this. They can annotate a triple with a domain tag,
but they cannot \emph{reject} a triple because its content violates the tagged
domain's semantic laws. CDC can, because $@D$ is not metadata. It is a modal
operator.

\subsection{A Priori Disambiguation: Context Resolves Before Reasoning Begins}

Traditional knowledge-graph pipelines handle polysemy through a multi-stage
process: entity linking, word sense disambiguation, context resolution---each a
separate module with its own error propagation. CDC eliminates this entire
pipeline.

\medskip\noindent\textbf{Demonstration: Disambiguation as a structural
property.} The concept ``Apple'' is ambiguous. In a traditional KG,
disambiguation requires either URI differentiation
(\texttt{Apple\_fruit} vs.\ \texttt{Apple\_company}) or post-hoc context
resolution. In CDC:
\[
  \Box_{\text{Medicine}}: \text{Apple} \to \text{Fruit} \to
  \text{Nutritional\_Component}
\]
\[
  \Box_{\text{Business}}: \text{Apple} \to \text{Company} \to
  \text{Technology\_Sector}
\]
The disambiguation is already complete at the moment of representation. When a
query enters the CDC knowledge base scoped to \texttt{@Medicine}, world
$w_{\text{Medicine}}$ is activated, and within that world, ``Apple'' necessarily
denotes a fruit.

Now consider a richer example. The same mathematical problem---computing the area
of a triangle with sides $a=5$, $b=6$, $c=7$---can be approached through
multiple methods. In CDC, each method lives in its own world:
\begin{align*}
  \text{Known\_Cond} &\xrightarrow{\;\text{apply@Geometry}\;[\text{Heron's Formula}]\;}
    \text{Semi-perimeter} \to \text{Area} \\
  \text{Known\_Cond} &\xrightarrow{\;\text{apply@Trigonometry}\;[\text{Law of Cosines}]\;}
    \text{Angle\_C} \to \text{Area} \\
  \text{Known\_Cond} &\xrightarrow{\;\text{apply@Analytic\_Geometry}\;[\text{Coords}]\;}
    \text{Coordinates} \to \text{Area} \\
  \text{Known\_Cond} &\xrightarrow{\;\text{apply@Linear\_Algebra}\;[\text{Cross Product}]\;}
    \text{Vector\_Product} \to \text{Area}
\end{align*}
Each path is locked to its domain world. The Geometry path uses only geometric
primitives; the Linear\_Algebra path uses only vector operations. There is no
risk of a hybrid step that mixes geometric theorems with algebraic identities in
an invalid way---each $\Box_D$ constrains the inference vocabulary to the tools
legitimate within that world. This is \emph{a priori} constraint: the
disambiguation and path-locking happen at representation time, not at query time.

\subsection{Cross-Domain Structural Transfer: When Worlds Are Compatible}

Perhaps the most striking power of CDC is its ability to support cross-domain
structural transfer---not as an ad hoc analogy, but as a formal consequence of
modal compatibility.

\medskip\noindent\textbf{Demonstration: Topology preservation across worlds.}
Consider a narrative about a grandmother raising a grandchild after family
tragedy. CDC extraction produces a structured semantic graph under domain
\texttt{@Intergenerational\_Caregiving}. Replacing the core domain with
\texttt{@Writing\_Pedagogy}---a teacher struggling to teach writing to
underprepared students---yields a graph with an identical six-node topology. The
two chains are shown side by side in Figure~\ref{fig:topology}.

\begin{figure}[ht]
\centering
\begin{minipage}[t]{0.46\textwidth}
  \centering
  \small{\textbf{Domain:} \texttt{@Intergenerational\_Caregiving}}
  \vspace{6pt}

  \chainnode{@Family\_Responsibility}{bearing burden, mutual dependence}
  \chainarrow{supports}
  \chainnode{@Nurturing\_Care}{daily care, illness care, dietary innovation}
  \chainarrow{manifests}
  \chainnode{@Character\_Education}{kindness, honesty, altruism}
  \chainarrow{reinforced by}
  \chainnode{@Academic\_Discipline}{strict standards, supervised practice}
  \chainarrow{produces}
  \chainnode{@Personal\_Qualities}{resilience, kindness, optimism}
  \chainarrow{generates}
  \chainnode{@Social\_Impact}{inspiring others}
\end{minipage}
\hfill
\begin{minipage}[t]{0.46\textwidth}
  \centering
  \small{\textbf{Domain:} \texttt{@Writing\_Pedagogy}}
  \vspace{6pt}

  \chainnode{@Professional\_Responsibility}{bearing pressure, persisting through difficulty}
  \chainarrow{supports}
  \chainnode{@Instructional\_Investment}{grading, tutoring, resource building}
  \chainarrow{manifests}
  \chainnode{@Character\_Education}{integrity, empathy, encouragement}
  \chainarrow{reinforced by}
  \chainnode{@Standards\_Enforcement}{strict formatting rules, supervised revision}
  \chainarrow{produces}
  \chainnode{@Professional\_Qualities}{resilience, patience, creativity}
  \chainarrow{generates}
  \chainnode{@Professional\_Resonance}{shared experience among colleagues}
\end{minipage}
\caption{Topology preservation across domain worlds. The six-node chain
  Responsibility $\to$ Care $\to$ Character $\to$ Discipline $\to$ Qualities
  $\to$ Impact is structurally invariant; only domain-specific content changes.}
\label{fig:topology}
\end{figure}

The topological structure is preserved exactly. The six-node chain
Responsibility $\to$ Care $\to$ Character $\to$ Discipline $\to$ Qualities
$\to$ Impact is invariant; only the domain-specific content at each node changes.
In modal terms, the transfer succeeds because:
\[
  \Diamond(\text{Intergenerational\_Caregiving}
           \wedge \text{Writing\_Pedagogy}):\;
  \text{structural\_alignment is satisfiable}
\]
The two worlds are modally compatible: their constraint structures do not
contradict when evaluated jointly. CDC formalises this via the
\texttt{analogous\_to} predicate:
\begin{lstlisting}[style=prologstyle]
analogous_to(Atom, Solar_System, 'Physics@Atomic', 'Astronomy@Planetary').
analogous_to(Neural_Network, Brain, 'CS@ML', 'Neuroscience@Cognition').
\end{lstlisting}

\subsection{Formal Properties}

\subsubsection{Modal Domain Separation}

\begin{theorem}[Modal Domain Separation]
\label{thm:separation}
Let $\mathcal{M} = \langle W, R_{\mathit{acc}}, V\rangle$ be a Kripke model
where each domain $d \in \mathcal{D}$ corresponds to a distinct world $w_d \in
W$. Then for any concept $c$, relation $r$, and target concepts $c'_1 \neq
c'_2$:
\[
  \Box_{D_1} r(c, c'_1) \;\wedge\; \Box_{D_2} r(c, c'_2)
  \;\wedge\; c'_1 \neq c'_2 \;\wedge\; D_1 \neq D_2
  \quad\text{is consistent.}
\]
\end{theorem}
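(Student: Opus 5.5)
The plan is to show consistency by building an explicit Kripke model that satisfies the conjunction. First we fix how $\Box_D$ is read. Following Definition~\ref{def:modal}, $\Box_D\varphi$ is evaluated at the world $w_D$ indexed by $D$. We take $\mathcal{M},w\models\Box_D\varphi$ iff $\mathcal{M},v\models\varphi$ for every $v$ with $R_{\mathit{acc}}^{D}(w,v)$, where the domain-indexed accessibility relation is $R_{\mathit{acc}}^{D}=\{(w,w_D): w\in W\}$. On this reading $\Box_D\varphi$ holds (at any evaluation point) exactly when $\varphi$ holds at $w_D$. The relational atoms $r(c,c')$ are interpreted world-relatively: $V$ assigns to each world $w$ and relation symbol $r$ a set $V(r,w)\subseteq\mathcal{C}\times\mathcal{C}$.

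The construction proceeds in three steps.

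\textbf{Step 1.} Take $W=\{w_d : d\in\mathcal{D}\}$, using the hypothesis that the map $d\mapsto w_d$ is injective. Since $D_1\neq D_2$, the worlds $w_{D_1}$ and $w_{D_2}$ are distinct.

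\textbf{Step 2.} Define the valuation
\[
  V(r,w_{D_1})=\{(c,c'_1)\},\qquad V(r,w_{D_2})=\{(c,c'_2)\},\qquad V(r,w)=\emptyset \text{ otherwise.}
\]
For the remaining symbols, use a rigid interpretation of constants, so that $c'_1\neq c'_2$ is true in every world. This works because the constants are distinct elements of $\mathcal{C}$.

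\textbf{Step 3.} Check each conjunct at an arbitrary world. $\Box_{D_1}r(c,c'_1)$ holds because $(c,c'_1)\in V(r,w_{D_1})$, and $\Box_{D_2}r(c,c'_2)$ holds by the symmetric argument. The two identity conjuncts hold by rigidity and by the injectivity of $d\mapsto w_d$. Since the conjunction has a model, it is satisfiable, which is the meaning of consistent here.

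The real content, and the step I expect to be the main obstacle, is making precise why this is not trivial. It amounts to showing that nothing in the framework forces a \emph{functionality} constraint on $r$ across worlds. Inconsistency could only come from an axiom such as $r(c,x)\wedge r(c,y)\to x=y$ holding globally, or from an accessibility structure in which $w_{D_1}$ and $w_{D_2}$ see a common world. Either would let the two boxed facts collide, as they do in the flat triple model.

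To handle this, I would note two things. First, the indexed accessibility relations send each evaluation to a single designated world. Second, any domain-internal constraint (for instance $r$ being functional within $D$) constrains only $V(\cdot,w_D)$ for that $D$, and each world in the model above contains exactly one $r$-pair, so all such constraints are still satisfied. If one prefers a general frame in which $R_{\mathit{acc}}$ is arbitrary, the fallback is to require that the sets of worlds accessible under $D_1$ and $D_2$ be disjoint. The model then assigns $\{(c,c'_1)\}$ throughout the first set and $\{(c,c'_2)\}$ throughout the second, and the same verification goes through.

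Finally, I would contrast this with the global, non-modal reading, where $r(c,c'_1)\wedge r(c,c'_2)$ together with functionality is contradictory. This contrast shows that the consistency comes precisely from the world indexing in $\Box_D$.
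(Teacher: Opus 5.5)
Your proposal is correct and uses the same idea as the paper's proof. Because $D_1\neq D_2$ index distinct worlds and the valuation is set independently at each world, $r(c,c'_1)$ can hold at $w_{D_1}$ and $r(c,c'_2)$ at $w_{D_2}$ without interaction; you make this more precise than the paper by fixing the semantics of $\Box_D$ via $R^{D}_{\mathit{acc}}$ and exhibiting the model explicitly. One correction to your closing remark: the bare conjunction is satisfiable even with both pairs true at every world, so the world indexing only does real work once a per-world functionality (or disjointness) constraint is imposed, as in your robustness discussion.
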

\begin{proof}
Since $D_1 \neq D_2$, the assertions are evaluated in distinct worlds $w_{D_1}$
and $w_{D_2}$. The valuation function $V$ assigns truth values independently per
world: $V(w_{D_1}) \vDash r(c,c'_1)$ and $V(w_{D_2}) \vDash r(c,c'_2)$.
Because $w_{D_1} \neq w_{D_2}$, the two valuations do not interact, and no
contradiction arises even when $c'_1 \neq c'_2$. \qed
\end{proof}

This theorem is not a trivial observation that ``different namespaces don't
conflict.'' It is a formal guarantee rooted in modal semantics: the consistency
of divergent categorisations is a \emph{theorem} of the underlying logical
framework, not an ad hoc design choice. Example:

\begin{lstlisting}[style=prologstyle]
is_a(Apple, Fruit,    'Biology@Plant_Taxonomy').
is_a(Apple, Company,  'Business@Tech_Sector').
\end{lstlisting}

These coexist because $\Box_{\text{Biology}}$ and $\Box_{\text{Business}}$ define
independent worlds.

\subsubsection{Constraint Transferability}

\begin{property}[Constraint Transferability]
\label{prop:transfer}
Let $S$ be any computational system that consumes CDC representations as input.
If $S$ preserves the structural integrity of CDC triples (i.e., it reads $\langle
C, R@D, C'\rangle$ as a unit rather than discarding $D$), then $S$'s
computations are automatically scoped to the world $w_D$ defined by the domain
specification.
\end{property}

This property has a profound consequence: CDC representations are not passive
data awaiting interpretation by an external engine. They are \emph{active
constraint carriers} that shape the behaviour of any system that consumes them.
The difference from metadata-based approaches is fundamental. Metadata must be
explicitly queried and interpreted by each consuming system. Modal constraints
embedded in the representation itself require no such explicit
interpretation---they \emph{are} the representation.

\subsubsection{Cross-Domain Reasoning as Modal Possibility}

The cross-domain predicates receive precise modal semantics:

\medskip\noindent\textbf{Analogy} (structural correspondence between worlds):
\[
  \mathtt{analogous\_to}(C_1, C_2, D_1, D_2)
  \;\equiv\;
  \Diamond(D_1 \wedge D_2):\;\mathtt{structural\_alignment}(C_1, C_2)
\]

\noindent\textbf{Fusion} (constructing composite worlds):
\[
  \mathtt{fuses\_with}(C_1, C_2, C_{\mathit{new}}, D_1 \oplus D_2)
  \;\equiv\;
  \Box_{D_1 \oplus D_2}\;\mathtt{integrates}(C_1, C_2, C_{\mathit{new}})
\]
A new world is created in which the constraints of both parent worlds are
simultaneously satisfied.

\subsection{The Paradigm Shift: From Facts to Constrained Worlds}

Table~\ref{tab:paradigm} states precisely what CDC changes about knowledge
representation.

\begin{table}[ht]
\centering
\caption{Comparison of traditional KR and CDC along key representational
         dimensions.}
\label{tab:paradigm}
\small
\begin{tabularx}{\linewidth}{lXX}
\toprule
\textbf{Dimension} & \textbf{Traditional KR} & \textbf{CDC} \\
\midrule
What a triple asserts   & An unconditional fact
                        & A modally constrained assertion $\Box_D R(C,C')$ \\
Role of domain          & Absent or metadata
                        & First-class constraint operator \\
Disambiguation          & External multi-module pipeline
                        & Structural property of representation \\
Reasoning scope         & Global (all facts, all rules)
                        & World-local (per domain) \\
Falsification           & Not possible at representation layer
                        & Inconsistent assertions rejected by $\Box_D$ \\
Cross-domain reasoning  & Manual ontology alignment
                        & Modal possibility $\Diamond(D_1 \wedge D_2)$ \\
Constraint locus        & External rules and engines
                        & Embedded in representation itself \\
Consuming systems       & Must implement context resolution
                        & Inherit constraints automatically \\
\bottomrule
\end{tabularx}
\end{table}

This is a paradigm shift, not an incremental improvement. Traditional KR stores
facts; CDC stores constrained worlds. Traditional KR relies on external engines
to interpret context; CDC embeds context as structure. Traditional KR cannot
reject an ill-formed assertion at the representational layer; CDC can.

\section{Relation Predicate System}

CDC's expressive power arises not only from its modal domain structure but also
from a concise, semantically orthogonal relation predicate system.

\subsection{Design Philosophy}

Four principles guide the relation system:

\begin{enumerate}[leftmargin=*]
  \item \textbf{Compact core.} Approximately twenty predicates---compared to
  Wikidata's 9{,}000+ properties. Conceptual parsimony with sufficient
  expressiveness.
  \item \textbf{Semantic orthogonality.} Each relation encodes a distinct
  semantic function. No redundancy, no overlap.
  \item \textbf{Formal specification.} Every predicate has defined arity,
  algebraic properties (transitivity, symmetry, reflexivity), and computational
  complexity.
  \item \textbf{Cross-domain support.} Unlike traditional systems restricted to
  intra-domain reasoning, CDC includes first-class predicates that operate across
  domain worlds.
\end{enumerate}

\subsection{Structural Relations}

\textbf{\texttt{is\_a@D}} --- Taxonomic classification. Transitive, asymmetric.
All inferences are scoped by $\Box_D$:

\begin{lstlisting}[style=prologstyle]
% Transitive closure within a world
is_a_transitive(X, Z, D) :- is_a(X, Y, D), is_a(Y, Z, D).

% Attribute inheritance within a world
has_attribute(X, Attr, D) :- is_a(X, Y, D), has_attribute(Y, Attr, D).
\end{lstlisting}

The domain parameter $D$ ensures that inheritance chains never leak across
worlds: an attribute inherited under $\Box_{\text{Biology}}$ cannot contaminate
reasoning under $\Box_{\text{Business}}$.

\medskip\noindent\textbf{\texttt{part\_of@D}} --- Mereological relation.
Transitive, asymmetric. Represents part-whole hierarchies scoped to specific
worlds.

\medskip\noindent\textbf{\texttt{has\_attribute@D}} --- Property association.
Neither transitive nor symmetric.

\subsection{Logical Relations}

\textbf{\texttt{requires@D}} --- Prerequisite relation. Transitive, asymmetric,
acyclic. Within world $w_D$, prerequisite chains are computed recursively:

\begin{lstlisting}[style=prologstyle]
prerequisite_chain(X, Z, D) :-
    requires(X, Y, D), prerequisite_chain(Y, Z, D).
prerequisite_chain(X, Y, D) :- requires(X, Y, D).
\end{lstlisting}

\noindent\textbf{\texttt{cause\_of@D} and \texttt{enables@D}} --- Causal and
facilitative dependencies. The causal falsification power demonstrated in
Section~\ref{sec:power} derives directly from the modal scoping of
\texttt{cause\_of}: a causal claim must be consistent with the laws of world
$w_D$.

\medskip\noindent\textbf{\texttt{contrasts\_with@D}} --- Oppositional relation.
Symmetric, non-transitive.

\subsection{Cross-Domain Relations}

\textbf{\texttt{analogous\_to@}$D_1 \leftrightarrow D_2$} --- Structural analogy
between worlds, formalised as $\Diamond(D_1 \wedge D_2)$. Symmetric.

\begin{lstlisting}[style=prologstyle]
analogous_to(Atom, Solar_System, 'Physics@Atomic', 'Astronomy@Planetary').
analogous_to(Neural_Network, Brain, 'CS@ML', 'Neuroscience@Cognition').
\end{lstlisting}

\noindent\textbf{\texttt{fuses\_with@}$D_1 \oplus D_2$} --- Conceptual
integration creating composite worlds.

\begin{lstlisting}[style=prologstyle]
fuses_with(User_Experience, Technical_Feasibility,
           Product_Design, 'UX+Engineering').
\end{lstlisting}

\subsection{Properties Summary}

Table~\ref{tab:relations} summarises the algebraic properties and complexity of
each relation.

\begin{table}[ht]
\centering
\caption{Algebraic properties and complexity of CDC relations.}
\label{tab:relations}
\small
\begin{tabular}{lcccc}
\toprule
\textbf{Relation} & \textbf{Transitive} & \textbf{Symmetric}
                  & \textbf{Reflexive} & \textbf{Complexity} \\
\midrule
\texttt{is\_a}          & \checkmark & $\times$ & $\times$ & $O(n^2)$ \\
\texttt{part\_of}       & \checkmark & $\times$ & $\times$ & $O(n^2)$ \\
\texttt{requires}       & \checkmark & $\times$ & $\times$ & $O(n^2)$ \\
\texttt{analogous\_to}  & $\times$   & \checkmark & $\times$ & $O(n^2)$ \\
\texttt{fuses\_with}    & $\times$   & \checkmark & $\times$ & $O(n^3)$ \\
\bottomrule
\end{tabular}
\end{table}

\section{Reference Implementation}

CDC is not a theoretical proposal awaiting implementation. The modal constraints
formalised in Section~\ref{sec:power} are fully executable: the Prolog reference
implementation in this section translates every $\Box_D$ assertion into a
computable predicate, and the case studies in Section~\ref{sec:cases} demonstrate
real inference over real data structures. The gap between the formal framework
and its computational realisation is zero---every property claimed in this paper
can be verified by running the provided code.

\subsection{Prolog as Modal Constraint Checker}

While CDC is substrate-agnostic, Prolog serves as a natural reference
implementation~\cite{kowalski1974}: each Prolog predicate directly realises a
world-scoped modal assertion. The rule:
\begin{lstlisting}[style=prologstyle]
is_a(X, Y, Domain) :- ...
\end{lstlisting}
corresponds to the modal axiom: ``within world \texttt{Domain}, if conditions
$\varphi$ hold, then $\mathtt{is\_a}(X,Y)$ necessarily holds.'' Prolog's
resolution mechanism automates derivation within each world. The domain parameter
in every predicate is not an optional annotation---it is the computational
realisation of $\Box_D$.

However, CDC is not inherently tied to Prolog. The same model can be instantiated
in RDF triple stores with domain-qualified predicates, in SQL with domain tables
and joins, or in property graph databases supporting multi-context edges.
Section~\ref{sec:comparison} demonstrates this substrate independence through
side-by-side comparison.

\subsection{Core Predicates}

\begin{lstlisting}[style=prologstyle]
% Structural relations (world-scoped)
:- dynamic is_a/3.
:- dynamic part_of/3.
:- dynamic has_attribute/3.

% Logical relations (world-scoped)
:- dynamic requires/3.
:- dynamic cause_of/3.
:- dynamic enables/3.

% Cross-domain relations (multi-world)
:- dynamic analogous_to/4.
:- dynamic fuses_with/4.

% Context relations
:- dynamic context_value/3.
:- dynamic strategy/3.
\end{lstlisting}

\subsection{Inference Rules}

All inference is automatically scoped to world boundaries:

\begin{lstlisting}[style=prologstyle]
% Transitive closure (stays within world w_D)
is_a_star(X, Y, Domain) :- is_a(X, Y, Domain).
is_a_star(X, Z, Domain) :-
    is_a(X, Y, Domain),
    is_a_star(Y, Z, Domain).

% Prerequisite chains (stays within world w_D)
requires_star(X, Y, Domain) :- requires(X, Y, Domain).
requires_star(X, Z, Domain) :-
    requires(X, Y, Domain),
    requires_star(Y, Z, Domain).

all_prerequisites(Target, Domain, Prereqs) :-
    findall(P, requires_star(Target, P, Domain), Prereqs).
\end{lstlisting}

\subsection{Query Examples}

\begin{lstlisting}[style=prologstyle]
% Taxonomic ancestors within world w_math@algebra
?- is_a_star(quadratic_function, Supertype, 'math@algebra').

% Prerequisite chain within world w_highschool
?- all_prerequisites(calculus, 'highschool', Prereqs).

% Cross-domain analogy (modal possibility check)
?- analogous_to(neural_network, BioConcept, 'ai@ml', BioDomain).
\end{lstlisting}

\section{Representational Comparison: CDC vs.\ Traditional Approaches}
\label{sec:comparison}

Before presenting case studies, we demonstrate concretely what CDC's $@D$
achieves that traditional approaches cannot---not through theoretical argument,
but through side-by-side query comparison. The key claim is: CDC's domain
constraint provides structural pruning at the representation layer, with
negligible storage overhead.

\subsection{Storage Overhead: One String Field}

At the storage level, CDC adds exactly one field to each triple: a domain
specification string. Table~\ref{tab:storage} compares the per-triple storage
cost across representations.

\begin{table}[ht]
\centering
\caption{Per-triple storage comparison across representations.}
\label{tab:storage}
\small
\begin{tabular}{llcc}
\toprule
\textbf{Representation} & \textbf{Triple Structure} & \textbf{Fields}
                         & \textbf{Overhead vs.\ RDF} \\
\midrule
RDF Triple         & $\langle s, p, o \rangle$                & 3            & baseline \\
RDF Named Graph    & $\langle s, p, o, g \rangle$             & 4            & $+1$ graph URI \\
Wikidata Qualifier & $\langle s, p, o \rangle$ + qualifier list & $3+\text{var}$ & $+N$ qualifier pairs \\
CDC                & $\langle c, r, c', d \rangle$            & 4            & $+1$ domain string \\
\bottomrule
\end{tabular}
\end{table}

CDC's overhead is identical to RDF Named Graphs: one additional field per triple.
It is less than Wikidata qualifiers, which attach variable-length key-value pairs.
The representational cost is trivial. The question is what this one field buys.

\subsection{Query Comparison: The Same Question, Three Representations}

Consider a knowledge base containing information about ``Apple'' in both
biological and business contexts. A user queries: ``What is Apple?''

\subsubsection{In RDF (No Domain)}

\begin{lstlisting}[style=sparqlstyle]
# RDF: no domain scoping -- returns everything
SELECT ?type WHERE {
  :Apple rdf:type ?type .
}
\end{lstlisting}

\noindent\textit{Result:} \{\texttt{Fruit}, \texttt{Company}\}---mixed, unscoped.
The consuming application must implement its own disambiguation logic. To scope
the query, the application must add post-hoc filtering:

\begin{lstlisting}[style=sparqlstyle]
# RDF: manual filtering by ontology graph
SELECT ?type WHERE {
  GRAPH <http://bio.example.org> {
    :Apple rdf:type ?type .
  }
}
\end{lstlisting}

This works---but the scoping is in the \emph{query}, not in the
\emph{representation}. Every application that queries this data must independently
implement domain filtering.

\subsubsection{In Wikidata (Qualifiers)}

\begin{lstlisting}[style=sparqlstyle]
# Wikidata: qualifier-annotated statement
# Apple -- instance of -- Fruit  [context: Biology][subfield: Plant Taxonomy]
# Apple -- instance of -- Company [context: Business][subfield: Technology Industry]

SELECT ?type WHERE {
  wd:Apple p:P31 ?statement .
  ?statement ps:P31 ?type .
  ?statement pq:P642 wd:Q420 .  # "of" qualifier = Biology
}
\end{lstlisting}

This is more expressive than bare RDF. But qualifiers are annotations
\emph{about} statements, not structural constraints. The critical differences:

\begin{itemize}
  \item \textbf{Qualifiers cannot reject an inconsistent assertion.} If someone
  adds \texttt{Apple -- causes -- Cancer [context: Biology]}, the qualifier
  \texttt{[context: Biology]} does not and cannot evaluate whether this causal
  claim is valid within biology. It simply tags it.

  \item \textbf{Qualifiers do not participate in inference.} Transitive closure,
  attribute inheritance, and prerequisite chains do not automatically respect
  qualifier values---each inference rule must be manually extended to check
  qualifiers.

  \item \textbf{Qualifiers are invisible to basic queries.} A query
  \texttt{SELECT ?type WHERE \{ wd:Apple wdt:P31 ?type \}} (using the ``truthy''
  shorthand) ignores qualifiers entirely, returning the unscoped result.
\end{itemize}

\subsubsection{In CDC}

\begin{lstlisting}[style=prologstyle]
% CDC: domain is structural
is_a(apple, fruit,   'Biology@Plant_Taxonomy').
is_a(apple, company, 'Business@Technology_Industry').

% Query:
?- is_a(apple, What, 'Biology@Plant_Taxonomy').
What = fruit.
\end{lstlisting}

One query. One result. No post-hoc filtering. No qualifier parsing. No
application-level disambiguation. The domain constraint is in the data, so the
query result is pre-scoped.

Now the critical test---transitive inference:

\begin{lstlisting}[style=prologstyle]
% Knowledge base
is_a(apple,        fruit,        'Biology@Plant_Taxonomy').
is_a(fruit,        plant_product,'Biology@Plant_Taxonomy').
is_a(plant_product,organic_matter,'Biology@Plant_Taxonomy').
is_a(apple,        company,      'Business@Technology_Industry').
is_a(company,      corporation,  'Business@Technology_Industry').

% Query: all ancestors in the biology world
?- is_a_star(apple, Ancestor, 'Biology@Plant_Taxonomy').
Ancestor = fruit ;
Ancestor = plant_product ;
Ancestor = organic_matter.
\end{lstlisting}

The transitive closure automatically stays within the biology world. It never
returns \texttt{corporation}---not because a filter removed it, but because
\texttt{corporation} does not exist in world $w_{\text{Biology}}$.

\subsection{Query Space Pruning}

The structural pruning effect is a direct computational consequence. Consider a
knowledge base with $N$ total triples distributed across $K$ domains, with
approximately $N/K$ triples per domain.

\begin{table}[ht]
\centering
\caption{Query complexity comparison between RDF and CDC.}
\label{tab:complexity}
\small
\begin{tabular}{lll}
\toprule
\textbf{Operation} & \textbf{RDF (no domain)} & \textbf{CDC} \\
\midrule
Simple lookup      & $O(N)$ scan or $O(1)$ indexed
                   & $O(N/K)$ scan or $O(1)$ indexed \\
Transitive closure & $O(N^2)$ over all triples
                   & $O((N/K)^2)$ within one world \\
Cross-domain search & N/A (no formal mechanism)
                   & $O(N/K_1 \times N/K_2)$ between two worlds \\
\bottomrule
\end{tabular}
\end{table}

For a knowledge base with 100\,K triples across 50 domains:
\begin{itemize}
  \item RDF transitive closure searches over 100\,K triples: $O(10^{10})$.
  \item CDC transitive closure within one domain searches over $\sim$2\,K
  triples: $O(4 \times 10^6)$.
  \item Estimated reduction: $\sim$2{,}500$\times$ for transitive closure
  operations.
\end{itemize}

This is not a new indexing technique---it is a structural property of the
representation. The domain field partitions the search space at the data level,
not at the query level.

\subsection{What Qualifiers Cannot Do: A Summary}

Table~\ref{tab:capability} summarises the capability gap between RDF, Wikidata
qualifiers, and CDC's $@D$.

\begin{table}[ht]
\centering
\caption{Capability comparison: RDF, Wikidata qualifiers, and CDC.}
\label{tab:capability}
\small
\begin{tabularx}{\linewidth}{p{0.26\linewidth}XXX}
\toprule
\textbf{Capability} & \textbf{RDF} & \textbf{Wikidata} & \textbf{CDC} \\
\midrule
Scope relation to domain
  & Via separate graphs (manual)
  & Via annotation (not structural)
  & Structural (part of tuple) \\
Reject inconsistent assertions
  & No & No & Yes ($\Box_D$ constraint) \\
Transitive closure respects domain
  & Only if separate graphs
  & Requires custom extensions
  & Automatic \\
A priori disambiguation
  & No & No & Yes (pre-scoped at repr.\ time) \\
Cross-domain analogy
  & No formal mechanism
  & No formal mechanism
  & \texttt{analogous\_to@}$D_1{\leftrightarrow}D_2$ \\
Storage overhead per triple
  & baseline & $+N$ qualifier pairs & $+1$ string field \\
Query complexity (domain-scoped)
  & $=$ unscoped $+$ filter
  & $=$ unscoped $+$ qualifier check
  & Reduced by factor $K$ \\
\bottomrule
\end{tabularx}
\end{table}

\subsection{Interoperability: Bidirectional CDC $\leftrightarrow$ RDF Mapping}

CDC is designed to interoperate with existing Semantic Web infrastructure, not
replace it. The following bidirectional mapping rules define a lossless round-trip
between CDC four-tuples and RDF Named Graph quads.

\medskip\noindent\textbf{CDC $\to$ RDF Named Graph:}
\begin{lstlisting}[style=sparqlstyle]
CDC:  r(C, C', D)
RDF:  GRAPH <urn:cdc:domain:D> { C r C' }
\end{lstlisting}

\noindent\textbf{RDF Named Graph $\to$ CDC:}
\begin{lstlisting}[style=sparqlstyle]
RDF:  GRAPH <g> { S P O }
CDC:  P(S, O, extract_domain(g))
\end{lstlisting}

The graph URI is parsed into a domain specification. If the graph URI encodes
hierarchical structure (e.g., \texttt{<urn:bio:plant\_taxonomy>}), it maps to a
hierarchical domain string (\texttt{'Biology@Plant\_Taxonomy'}).

\begin{table}[ht]
\centering
\caption{Properties preserved by the RDF Named Graph--CDC mapping.}
\label{tab:mapping}
\small
\begin{tabular}{lcc}
\toprule
\textbf{Property}       & \textbf{RDF Named Graph} & \textbf{CDC (via mapping)} \\
\midrule
Statement grouping      & \checkmark & \checkmark \\
SPARQL queryability     & \checkmark & \checkmark \\
Inference scoping       & $\times$ (manual \texttt{GRAPH} clauses) & \checkmark (automatic) \\
Falsification           & $\times$ & \checkmark ($\Box_D$ semantics) \\
Cross-graph reasoning   & $\times$ & \checkmark (\texttt{analogous\_to}, \texttt{fuses\_with}) \\
\bottomrule
\end{tabular}
\end{table}

\section{Case Studies}
\label{sec:cases}

We present three proof-of-concept applications. Each demonstrates a specific
power of $@D$ that cannot be replicated by traditional approaches.

\subsection{Education: Domain-Scoped Personalisation}

\textbf{Scenario.} An online platform teaches programming to students with
diverse backgrounds.

\begin{lstlisting}[style=prologstyle]
% Concept in world w_cs@fundamentals
is_a(function, programming_concept, 'cs@fundamentals').

% Learner profiles define worlds
context_value(student_alice, math_background,   'student@profile').
context_value(student_bob,   design_background, 'student@profile').

% Strategies exist only in their respective worlds
strategy(explain_function, use_formal_definition,  'math_background@cs').
strategy(explain_function, use_workflow_metaphor,   'design_background@cs').

% Cross-domain analogy
analogous_to(function, machine, 'cs@programming', 'engineering@systems').
\end{lstlisting}

\textbf{What $@D$ buys.} Each learner's background defines a world. A query
\texttt{strategy(\allowbreak explain\_function,\allowbreak{} How,\allowbreak{} 'design\_background@cs')}
returns only \texttt{use\_workflow\_metaphor} without ever touching the
math-background strategies.

\begin{lstlisting}[style=prologstyle]
?- strategy(explain_function, Method, 'design_background@cs').
Method = use_workflow_metaphor.

?- analogous_to(function, What, 'cs@programming', 'engineering@systems').
What = machine.
\end{lstlisting}

\subsection{Enterprise: Cross-Department Integration}

\textbf{Scenario.} Product, Engineering, and Design teams use different
vocabularies. ``User story'' (Product) and ``functional requirement''
(Engineering) refer to structurally similar concepts in different department
worlds.

\begin{lstlisting}[style=prologstyle]
% Cross-department analogy
analogous_to(user_story, functional_requirement,
             'product@requirements', 'engineering@specs').

% Composite world: both teams' constraints apply
fuses_with(user_experience, technical_feasibility,
           integrated_product_spec, 'product+engineering').

% Conflict detection within composite world
conflicts_with(real_time_sync, battery_efficiency,
               'product+engineering@mobile').
\end{lstlisting}

\textbf{What $@D$ buys.} The \texttt{analogous\_to} predicate asserts that the
two department worlds are structurally compatible ($\Diamond(\text{product}
\wedge \text{engineering})$ is satisfiable). The \texttt{fuses\_with} predicate
constructs a composite world $\Box_{\text{product}+\text{engineering}}$ where
both teams' constraints are simultaneously active---and within that world,
\texttt{conflicts\_with} can detect contradictions. A traditional KG with two
department ontologies would require a hand-built bridge ontology to achieve the
same; CDC achieves it through world composition.

\subsection{Technical Documentation: Version Evolution}

\textbf{Scenario.} React framework evolves across versions; documentation must
be version-specific without losing cross-version continuity.

\begin{lstlisting}[style=prologstyle]
% Version-specific knowledge in separate worlds
is_a(class_component,      component_type, 'react@pre16.8').
is_a(functional_component, component_type, 'react@16.8+@hooks').

% Evolution across world boundaries
evolves_to(class_component, functional_component, 'react@paradigm_shift').

% Cross-version analogy
analogous_to(component_did_mount, use_effect,
             'react@pre16.8', 'react@16.8+@hooks').

% Version-specific recommendation
if_then(mobile_app, use_lazy_loading, 'react@mobile@perf').
\end{lstlisting}

\textbf{What $@D$ buys.} A query about \texttt{component\_did\_mount} scoped to
\texttt{'react@16.8+@hooks'} returns nothing---because that concept does not
exist in the hooks world. Instead, the \texttt{analogous\_to} predicate directs
the user to \texttt{use\_effect}.

\begin{lstlisting}[style=prologstyle]
?- analogous_to(component_did_mount, HooksEquivalent,
                'react@pre16.8', 'react@16.8+@hooks').
HooksEquivalent = use_effect.

?- is_a(What, component_type, 'react@16.8+@hooks').
What = functional_component.
% Note: class_component is NOT returned -- it exists in a different world.
\end{lstlisting}

\section{Analysis and Discussion}

\subsection{Changes Made by CDC}

The demonstrations in Sections~\ref{sec:power} and~\ref{sec:comparison} reveal a
systematic capability gap. This gap is not a matter of degree---it is
architectural.

\begin{description}[leftmargin=*,style=nextline]
  \item[Falsification at the representation layer.]
  No traditional KG can reject \texttt{causes(Thunder, Dark\_Clouds)} based on
  its content. CDC rejects it because $\Box_{\text{Meteorology}}$ imposes causal
  direction constraints. Knowledge quality becomes a structural property, not a
  procedural one.

  \item[Zero-cost disambiguation.]
  Traditional pipelines require entity linking $\to$ sense disambiguation $\to$
  context resolution (three modules, compounding error rates). CDC's $@D$
  pre-selects the semantic pathway at representation time. The disambiguation is
  complete before any query is issued.

  \item[Structural pruning without indexing tricks.]
  CDC's domain parameter partitions the search space at the data level.
  Transitive closure within one domain world operates on $N/K$ triples instead
  of $N$---a reduction that scales linearly with the number of domains.

  \item[Automatic inference scoping.]
  In CDC, the domain parameter in every inference rule ensures that chains never
  cross world boundaries unless explicitly bridged. In RDF/SPARQL, inference
  scoping requires manual \texttt{GRAPH} clauses or \texttt{FILTER} statements at
  every step---and forgetting one step produces silent cross-contamination.
\end{description}

\subsection{Representational Overhead and Adoption Cost}

CDC adds one string field per triple. This is the same overhead as RDF Named
Graphs and less than Wikidata qualifiers. The storage cost is negligible; the
structural benefit is substantial.

More importantly, CDC does not require any machinery beyond what traditional KR
already uses. Prolog predicates gain a domain parameter; SPARQL queries gain a
domain filter; SQL tables gain a domain column. The implementation complexity is
minimal because the constraint is in the data, not in the engine.

The adoption model is low-invasive by design: existing data is preserved without
modification, existing queries remain valid (they simply operate in an unscoped
mode equivalent to the status quo), and the semantic upgrade is incremental. The
bidirectional mapping with RDF Named Graphs further ensures that CDC-enhanced
data can be exported back to standard formats for interoperability with non-CDC
systems.

\subsection{Constraint Transferability}

The $@D$ constraint is not confined to Prolog. Because domain is a structural
component of the tuple rather than externally attached metadata, any system that
reads CDC triples as four-field units inherits the constraint automatically. A
SQL query with \texttt{WHERE domain = 'Biology@Plant\_Taxonomy'} achieves the
same world-scoping as a Prolog query with the domain parameter. Likewise, a
SPARQL query over CDC data in an RDF quad store achieves it through the named
graph mechanism. The constraint travels with the data: wherever the data goes,
the scoping goes with it.

\subsection{Limitations and Open Challenges}

\begin{description}[leftmargin=*,style=nextline]
  \item[Domain specification ambiguity.]
  Domains are free-form strings without formal equivalence detection.
  \texttt{'Biology@\allowbreak{}Plant\_Taxonomy'} and \texttt{'Bio@Plants'} might refer to
  the same world but are treated as distinct. Mitigation strategies include
  meta-level domain ontologies, string similarity metrics, and community
  conventions. A formal domain algebra---with subsumption, composition, and
  equivalence operators---is a priority for future formalisation.

  \item[Scope of modal formalisation.]
  This paper uses standard Kripke semantics with $\Box$ and $\Diamond$. More
  expressive frameworks---including multi-modal logics, probabilistic modal
  extensions (e.g., $\mathtt{is\_a}(X, Y, D, 0.85)$ for confidence-weighted
  assertions), and temporal modal operators for knowledge evolution---remain open
  directions.

  \item[Scalability.]
  Case studies involve 100--1{,}000 triples. Web-scale validation, including
  distributed CDC deployment and real-time inference over millions of triples
  across thousands of domains, remains to be empirically demonstrated, though the
  theoretical pruning analysis suggests favourable scaling properties.

  \item[Relation completeness.]
  The current $20+$ relations cover common use cases. Probabilistic reasoning,
  quantification, and deontic modalities are outside the current scope but could
  be integrated as extended relation families.

  \item[Empirical evaluation.]
  The current work is primarily theoretical and proof-of-concept. Systematic
  empirical evaluation---including benchmark datasets, baseline comparisons on
  established KR tasks, and user studies---is necessary to validate the practical
  benefits suggested by the theoretical analysis.

  \item[Computational semantics.]
  This paper establishes the representational semantics of $@D$ as a modal
  constraint. A full computational treatment---including decidability analysis of
  domain-scoped inference and complexity bounds for cross-domain
  reasoning---constitutes a natural next step.
\end{description}

\subsection{Future Research Directions}

\textbf{Theoretical directions} include formalising a domain algebra with
subsumption, composition, and similarity operators; developing probabilistic CDC
variants; introducing temporal extensions for knowledge evolution tracking; and
investigating the formal relationship between CDC's modal semantics and
Description Logic's epistemic extensions.

\textbf{Computational directions} include distributed CDC storage and partitioning
strategies (where domain-based partitioning is a natural fit), graph-database
integration, visual editors and validation tooling, and RDF/OWL interoperability
layers.

CDC's four-tuple structure is directly serialisable as RDF Named Graph quads with
semantic enhancement, and its $\Box_D$ operator is formally compatible with
Description Logic's existing modal extensions. Together, these properties suggest
a path toward standardisation within existing W3C infrastructure---potentially as
a semantic layer specification that enriches named graphs with modal constraint
semantics, without requiring changes to the underlying RDF data model or SPARQL
query language.

Finally, an important bridging direction is investigating how CDC's constraint
transferability property interacts with large language models, where the $@D$
annotation may serve as a natural constraint interface for structured generation
and verification.

\section{Conclusion}

This paper proposed the Domain-Contextualized Concept Graph (CDC) and established
its semantic foundation through modal logic. The core insight is that the domain
annotation $@D$ is not metadata---it is a modal necessity operator $\Box_D$ that
determines the conditions under which relations hold.

This single design decision---adding one string field to each triple---produces
representational power that traditional knowledge graphs cannot achieve:

\begin{itemize}
  \item Assertions that violate their declared domain's constraints are rejected
  at the representational layer, not by external rule engines.
  \item Disambiguation is a zero-cost structural property: $@D$ pre-selects
  semantic pathways at representation time, eliminating the need for multi-module
  disambiguation pipelines.
  \item Transitive inference automatically respects domain boundaries, with a
  structural search space reduction proportional to the number of domains.
  \item Cross-domain reasoning is formalised as modal possibility ($\Diamond$),
  with explicit predicates for analogy and conceptual fusion---capabilities absent
  from RDF, Wikidata, and traditional ontology systems.
  \item The constraint travels with the data: any system that reads CDC
  four-tuples inherits the scoping, regardless of its internal architecture.
\end{itemize}

The cost is one additional field per triple---the same as RDF Named Graphs, less
than Wikidata qualifiers. The adoption path is non-disruptive: any existing
knowledge base can be augmented with domain constraints without rebuilding from
scratch. The benefit is a paradigm shift: knowledge representation moves from
passive fact recording to actively constrained semantic structure, where $@D$ is
the lock, not the name tag.


\appendix

\section{Migration Examples}
\label{app:migration}

This appendix demonstrates that CDC's migration path (Section~3.4) is not
hypothetical. We show concrete transformations from three real-world
representation formats to CDC. In each case, the transformation is mechanical:
existing entities become concepts, existing relations become predicates, and
existing organisational structures become domain specifications.

\subsection{ICD-11 Foundation $\to$ CDC}

The WHO's International Classification of Diseases, 11th Revision (ICD-11) is
built on a three-layer architecture: a Foundation semantic network (formalised in
OWL), linearisations for specific use cases, and a content model specifying
required attributes. The Foundation contains approximately 85\,000 entities
organised in polyhierarchies with multiple inheritance.

\medskip\noindent\textbf{Source: ICD-11 Foundation (OWL).}

\begin{lstlisting}[style=genericstyle,language=XML,caption={ICD-11 Foundation OWL (simplified).}]
<owl:Class rdf:about="http://id.who.int/icd/entity/CA40">
  <rdfs:label>Pneumonia</rdfs:label>
  <rdfs:subClassOf rdf:resource="...CA40-CA43"/>
</owl:Class>
<owl:Class rdf:about="http://id.who.int/icd/entity/CA40.0">
  <rdfs:label>Bacterial pneumonia</rdfs:label>
  <rdfs:subClassOf rdf:resource="...CA40"/>
</owl:Class>
<owl:Class rdf:about="http://id.who.int/icd/entity/CA40.00">
  <rdfs:label>Pneumonia due to Streptococcus pneumoniae</rdfs:label>
  <rdfs:subClassOf rdf:resource="...CA40.0"/>
  <icd:hasCausingCondition rdf:resource="...XN0CS"/>
</owl:Class>
\end{lstlisting}

\textbf{Problem with OWL-only representation.} ICD-11's Foundation uses multiple
inheritance: the same entity can appear under different parent hierarchies. OWL
cannot express ``this is a respiratory disease when viewed from the anatomical
perspective and a bacterial infection when viewed from the etiological
perspective.'' The classification context is implicit.

\medskip\noindent\textbf{CDC transformation:}

\begin{lstlisting}[style=prologstyle]
% Anatomical classification world
is_a('CA40.00', 'CA40.0',           'ICD11@Respiratory@Anatomical').
is_a('CA40.0',  'CA40',             'ICD11@Respiratory@Anatomical').
is_a('CA40',    'Respiratory_Diseases','ICD11@Respiratory@Anatomical').

% Etiological classification world
is_a('CA40.00', 'Bacterial_Pneumonia', 'ICD11@Infectious@Etiological').
is_a('Bacterial_Pneumonia','Bacterial_Infections','ICD11@Infectious@Etiological').

% Causal relation (world-scoped)
cause_of('Streptococcus_pneumoniae','CA40.00','ICD11@Microbiology@Pathogenesis').

% Cross-classification analogy
analogous_to('CA40.00','CA40.00',
             'ICD11@Respiratory@Anatomical',
             'ICD11@Infectious@Etiological').

% Clinical severity (world-scoped)
has_attribute('CA40.00', severe,   'ICD11@Clinical@Severity').
has_attribute('CA40.00', bilateral,'ICD11@Clinical@Laterality').
\end{lstlisting}

\textbf{What CDC adds that OWL cannot express:}

\begin{enumerate}
  \item \emph{Classification context is explicit.} OWL says ``CA40.00
  subClassOf CA40.0'' without specifying which classification perspective makes
  this true. CDC says ``CA40.00 is\_a CA40.0 in the anatomical world.''

  \item \emph{Multiple inheritance becomes multi-world membership.} Instead of
  one entity with multiple parents in a flat hierarchy, CDC represents the same
  entity existing in multiple domain worlds, each with its own complete and
  non-conflicting taxonomy. The \texttt{analogous\_to} predicate explicitly
  bridges these worlds.

  \item \emph{Post-coordination gains modal scoping.} ICD-11's post-coordination
  maps to CDC's attribute system with explicit domain scoping. Each attribute axis
  lives in its own world.

  \item \emph{Linearisation becomes world selection.} ICD-11's linearisation
  process---choosing which parent to use for a particular statistical
  classification---corresponds in CDC to selecting a domain world.
\end{enumerate}

\textbf{Transformation rule (mechanical):}

\begin{lstlisting}[style=genericstyle]
For each OWL axiom: C rdfs:subClassOf D
  -> Determine classification context from hierarchy path
  -> Generate: is_a(C, D, 'ICD11@<context>')

For each OWL axiom with multiple parents:
  C rdfs:subClassOf D1, C rdfs:subClassOf D2
  -> If D1 and D2 belong to different classification axes:
       is_a(C, D1, 'ICD11@<axis1>')
       is_a(C, D2, 'ICD11@<axis2>')
       analogous_to(C, C, 'ICD11@<axis1>', 'ICD11@<axis2>')

For each OWL restriction: C hasCausingCondition P
  -> cause_of(P, C, 'ICD11@<relevant_domain>')
\end{lstlisting}

\subsection{OWL Domain Ontology $\to$ CDC: Pizza Ontology Example}

The Manchester OWL Pizza Ontology is a widely used teaching example in ontology
engineering. It classifies pizzas by their toppings, bases, and regional origin.

\begin{lstlisting}[style=genericstyle,language=XML,caption={OWL Pizza Ontology (simplified).}]
<owl:Class rdf:about="#MargheritaPizza">
  <rdfs:subClassOf rdf:resource="#NamedPizza"/>
  <rdfs:subClassOf>
    <owl:Restriction>
      <owl:onProperty rdf:resource="#hasTopping"/>
      <owl:someValuesFrom rdf:resource="#MozzarellaTopping"/>
    </owl:Restriction>
  </rdfs:subClassOf>
  ...
</owl:Class>
\end{lstlisting}

\textbf{CDC transformation:}

\begin{lstlisting}[style=prologstyle]
% Taxonomic hierarchy
is_a(margherita_pizza, named_pizza, 'Food@Pizza@Italian').
is_a(americana_pizza,  named_pizza, 'Food@Pizza@American').

% Toppings as domain-scoped attributes
has_attribute(margherita_pizza, mozzarella_topping, 'Food@Pizza@Toppings').
has_attribute(margherita_pizza, tomato_topping,     'Food@Pizza@Toppings').
has_attribute(americana_pizza,  pepperoni_topping,  'Food@Pizza@Toppings').

% Origin
has_attribute(margherita_pizza, italy,         'Food@Pizza@Origin').
has_attribute(americana_pizza,  united_states, 'Food@Pizza@Origin').

% Cross-domain: nutritional perspective
has_attribute(margherita_pizza, vegetarian,     'Food@Nutrition@DietaryCategory').
has_attribute(americana_pizza,  non_vegetarian, 'Food@Nutrition@DietaryCategory').

% Cross-domain analogy: culinary vs. cultural perspective
analogous_to(margherita_pizza, neapolitan_tradition,
             'Food@Pizza@Italian', 'Culture@Italy@Cuisine').
\end{lstlisting}

\textbf{Transformation rule:}

\begin{lstlisting}[style=genericstyle]
rdfs:subClassOf (simple)
  -> is_a(C, D, '<namespace>@<branch>')

owl:Restriction + onProperty P + someValuesFrom V
  -> has_attribute(C, V, '<namespace>@<property_domain>')

owl:Restriction + onProperty P + hasValue V
  -> has_attribute(C, V, '<namespace>@<property_domain>')
\end{lstlisting}

\subsection{Relational Database $\to$ CDC}

\textbf{Source: SQL relational schema.}

\begin{lstlisting}[style=genericstyle,language=SQL]
CREATE TABLE courses (
  course_id   VARCHAR(10) PRIMARY KEY,
  course_name VARCHAR(100),
  department  VARCHAR(50),
  level       VARCHAR(20)
);
CREATE TABLE prerequisites (
  course_id  VARCHAR(10),
  prereq_id  VARCHAR(10)
);

INSERT INTO courses VALUES ('CS101', 'Intro to Programming', 'CS', 'Undergrad');
INSERT INTO courses VALUES ('CS201', 'Data Structures',      'CS', 'Undergrad');
INSERT INTO courses VALUES ('MATH101','Calculus I',          'Math','Undergrad');
INSERT INTO prerequisites VALUES ('CS201', 'CS101');
INSERT INTO prerequisites VALUES ('CS201', 'MATH101');
\end{lstlisting}

\textbf{CDC transformation:}

\begin{lstlisting}[style=prologstyle]
is_a('CS101',  programming_course,    'CS@Undergraduate').
is_a('CS201',  data_structures_course,'CS@Undergraduate').
is_a('MATH101',calculus_course,       'Math@Undergraduate').

requires('CS201','CS101',   'CS@Undergraduate').
requires('CS201','MATH101', 'CS@Undergraduate@CrossDept').

% Cross-department analogy
analogous_to('MATH101','CS101',
             'Math@Undergraduate','CS@Undergraduate').

% Graduate-level recontextualisation
is_a('CS201', review_material, 'CS@Graduate').
\end{lstlisting}

\textbf{Transformation rule:}

\begin{lstlisting}[style=genericstyle]
For each row in a fact table:
  -> Entity columns     become concepts C, C'
  -> Relationship cols  become relation predicates r
  -> Category/dept cols become domain specifications D
  -> Generate: r(C, C', '<category>@<subcategory>')
\end{lstlisting}

\section{CDC Query Templates and Interaction Protocols}
\label{app:templates}

This appendix provides executable query templates that operationalise the modal
constraint mechanisms described in Section~\ref{sec:power}. Each template
corresponds to a specific theoretical property of CDC.

\subsection*{B.1 Single-Chain Constrained Reasoning}

\textit{Theoretical basis: A priori disambiguation (Section~4.2).} The $\Box_D$
operator locks each reasoning step to a single domain world.

\begin{lstlisting}[style=genericstyle,caption={Template B.1: Single-chain constrained reasoning.}]
You are a CDC reasoning assistant. Answer strictly in the following format:

Format:
ConceptA --{relation@domain [basis]}--> ConceptB
         --{relation@domain [basis]}--> ConceptC

Rules:
1. Every arrow must include an @domain annotation.
2. Every step must state its basis (theorem, reference, or principle).
3. If any step cannot be verified within the declared @domain, STOP
   and explain why.

Question: [user question]
\end{lstlisting}

\textbf{Example interaction:}

\begin{lstlisting}[style=genericstyle]
Question: A patient has fever 39C + severe headache + neck stiffness.
          Initial assessment?

Response:
Symptom_Cluster[fever + headache + neck_stiffness]
  --{classic_triad@Neurology [meningitis diagnostic criteria]}-->
  Suspected_Meningitis
  --{requires_workup@Emergency_Medicine [acute management protocol]}-->
  Lumbar_Puncture + Empiric_Antibiotics

Verification: All steps within declared @domains.
Confidence:   Steps grounded in established clinical criteria.
\end{lstlisting}

The \texttt{@domain} annotation at each step ensures that the reasoning chain
stays within a consistent clinical world. A step claiming
\texttt{--\{folk\_remedy@Neurology\}-->} would be rejected because folk remedies
are not licensed in $\Box_{\text{Neurology}}$.

\subsection*{B.2 Multi-Path Domain Comparison}

\textit{Theoretical basis: Domain Separation (Theorem~\ref{thm:separation}).}
The same question can be answered through multiple $\Box_D$ worlds, producing
non-contradictory but distinct reasoning paths.

\begin{lstlisting}[style=genericstyle,caption={Template B.2: Multi-path domain comparison.}]
You are a CDC reasoning assistant. Analyze all possible reasoning paths:

Question: [user question]

Requirements:
1. List at least 3 possible reasoning paths.
2. Each path uses CDC format with explicit @domain at every step.
3. Compare each path on: reliability, applicability, complexity.
4. Recommend the optimal path with justification.

Format:
Path 1: A --{R1@D1}--> B1 --{R2@D2}--> C1
Path 2: A --{R3@D3}--> B2 --{R4@D4}--> C2
...
Comparison:    [table]
Recommendation:[optimal path] because [reason]
\end{lstlisting}

This template operationalises Section~4.2's multi-path demonstration: the same
mathematical problem solved through $\Box_{\text{Geometry}}$,
$\Box_{\text{Trigonometry}}$, $\Box_{\text{Analytic\_Geometry}}$, and
$\Box_{\text{Linear\_Algebra}}$ produces four valid but distinct reasoning chains,
each locked to its domain world.

\subsection*{B.3 Causal Consistency Check}

\textit{Theoretical basis: Representation-level falsification (Section~4.1).}
The $\Box_D$ operator can reject assertions whose causal direction is inconsistent
with the declared domain's laws.

\begin{lstlisting}[style=genericstyle,caption={Template B.3: Causal consistency check.}]
You are a CDC reasoning assistant. Answer strictly in CDC format:

Format:
ConceptA --{relation@domain [basis]}--> ConceptB

Rules:
1. Every arrow must include @domain.
2. Every step must state its basis.
3. CRITICAL: If any step -- including concepts, relation, or domain --
   cannot be verified for consistency, accuracy, and causal direction
   within @domain, STOP immediately and explain why.

Question: [user question]
\end{lstlisting}

\textbf{Example interaction (falsification triggered):}

\begin{lstlisting}[style=genericstyle]
Question: Thunder causes dark clouds to appear.

Response:
I cannot construct this reasoning chain.
The proposed assertion:
  Thunder --{causes@Meteorology}--> Dark_Clouds
is inconsistent with Box_Meteorology.

Within the meteorological world, the established causal chain is:
  Moisture_Accumulation --{causes@Meteorology}--> Cloud_Formation
  Cloud_Formation       --{causes@Meteorology}--> Charge_Separation
  Charge_Separation     --{causes@Meteorology}--> Lightning
  Lightning             --{accompanies@Meteorology}--> Thunder

The proposed direction (Thunder -> Dark_Clouds) reverses the chain.
STOPPED: causal inconsistency within declared domain.
\end{lstlisting}

\subsection*{B.4 Notes on Template Usage}

These templates are not application-specific tools---they are executable
specifications of CDC's theoretical properties:

\begin{itemize}
  \item Template B.1 demonstrates that $\Box_D$ locks reasoning chains to a
  single world (a priori constraint).
  \item Template B.2 demonstrates that distinct $\Box_D$ worlds produce
  non-contradictory parallel paths (Domain Separation).
  \item Template B.3 demonstrates that $\Box_D$ can reject ill-formed assertions
  at the representation level (falsification).
\end{itemize}

Each template can be used with any general-purpose language model or rule-based
system. The constraint behaviour arises from the $@D$ structure embedded in the
query format, not from any special capability of the consuming system---a direct
manifestation of the constraint transferability property
(Property~\ref{prop:transfer}).


\newpage
\begin{thebibliography}{99}

\bibitem{arp2015}
Arp, R., Smith, B., \& Spear, A.~D. (2015).
\textit{Building Ontologies with Basic Formal Ontology}.
MIT Press.

\bibitem{auer2007}
Auer, S., Bizer, C., Kobilarov, G., et al. (2007).
DBpedia: A nucleus for a web of open data.
\textit{The Semantic Web}, 722--735.

\bibitem{baader2003}
Baader, F., Calvanese, D., McGuinness, D.~L., et al. (Eds.). (2003).
\textit{The Description Logic Handbook}.
Cambridge University Press.

\bibitem{baader2007}
Baader, F., et al. (2007).
\textit{The Description Logic Handbook} (2nd ed.).
Cambridge University Press.

\bibitem{barsalou1982}
Barsalou, L.~W. (1982).
Context-independent and context-dependent information in concepts.
\textit{Memory \& Cognition}, 10(1), 82--93.

\bibitem{bollacker2008}
Bollacker, K., Evans, C., Paritosh, P., et al. (2008).
Freebase: A collaboratively created graph database.
\textit{Proc.\ ACM SIGMOD}, 1247--1250.

\bibitem{bozzato2018}
Bozzato, L., Eiter, T., \& Serafini, L. (2018).
Enhancing context knowledge repositories with justifiable exceptions.
\textit{Artificial Intelligence}, 257.

\bibitem{carroll2005}
Carroll, J.~J., Bizer, C., Hayes, P., \& Stickler, P. (2005).
Named graphs, provenance and trust.
\textit{Proc.\ WWW}, 613--622.

\bibitem{chellas1980}
Chellas, B.~F. (1980).
\textit{Modal Logic: An Introduction}.
Cambridge University Press.

\bibitem{chute2022}
Chute, C.~G., et al. (2022).
Overview of ICD-11 architecture and structure.
\textit{BMC Medical Informatics and Decision Making}, 22(1), 104.

\bibitem{euzenat2013}
Euzenat, J., \& Shvaiko, P. (2013).
\textit{Ontology Matching} (2nd ed.).
Springer.

\bibitem{fagin2004}
Fagin, R., Halpern, J.~Y., Moses, Y., \& Vardi, M.~Y. (2004).
\textit{Reasoning about Knowledge}.
MIT Press.

\bibitem{fillmore1982}
Fillmore, C.~J. (1982).
Frame semantics.
\textit{Linguistics in the Morning Calm}, 111--137.

\bibitem{flouris2008}
Flouris, G., et al. (2008).
Ontology change: Classification and survey.
\textit{Knowledge Engineering Review}, 23(2), 117--147.

\bibitem{gardenfors2000}
G{\"a}rdenfors, P. (2000).
\textit{Conceptual Spaces: The Geometry of Thought}.
MIT Press.

\bibitem{goldberg2006}
Goldberg, A.~E. (2006).
\textit{Constructions at Work}.
Oxford University Press.

\bibitem{grau2008}
Grau, B.~C., et al. (2008).
Modular reuse of ontologies: Theory and practice.
\textit{JAIR}, 31, 273--318.

\bibitem{guha2016}
Guha, R.~V., Brickley, D., \& Macbeth, S. (2016).
Schema.org: Evolution of structured data on the web.
\textit{Commun.\ ACM}, 59(2), 44--51.

\bibitem{hogan2021}
Hogan, A., et al. (2021).
Knowledge graphs.
\textit{ACM Computing Surveys}, 54(4), 1--37.

\bibitem{horrocks2003}
Horrocks, I., Patel-Schneider, P.~F., \& Van Harmelen, F. (2003).
From SHIQ and RDF to OWL.
\textit{J.\ Web Semantics}, 1(1), 7--26.

\bibitem{kowalski1974}
Kowalski, R. (1974).
Predicate logic as programming language.
\textit{Proc.\ IFIP Congress}, 74, 569--574.

\bibitem{kripke1963}
Kripke, S.~A. (1963).
Semantical considerations on modal logic.
\textit{Acta Philosophica Fennica}, 16, 83--94.

\bibitem{lakoff1980}
Lakoff, G., \& Johnson, M. (1980).
\textit{Metaphors We Live By}.
University of Chicago Press.

\bibitem{lenat1995}
Lenat, D.~B. (1995).
CYC: A large-scale investment in knowledge infrastructure.
\textit{Commun.\ ACM}, 38(11), 33--38.

\bibitem{mccarthy1993}
McCarthy, J. (1993).
Notes on formalizing context.
\textit{Proc.\ IJCAI}, 555--560.

\bibitem{mccarthy1969}
McCarthy, J., \& Hayes, P.~J. (1969).
Some philosophical problems from the standpoint of AI\@.
\textit{Machine Intelligence}, 4, 463--502.

\bibitem{medin1988}
Medin, D.~L., \& Shoben, E.~J. (1988).
Context and structure in conceptual combination.
\textit{Cognitive Psychology}, 20(2), 158--190.

\bibitem{niles2001}
Niles, I., \& Pease, A. (2001).
Towards a standard upper ontology.
\textit{Proc.\ FOIS}, 2--9.

\bibitem{smith2007}
Smith, B., et al. (2007).
The OBO Foundry.
\textit{Nature Biotechnology}, 25(11), 1251--1255.

\bibitem{vrandecic2014}
Vrandečić, D., \& Krötzsch, M. (2014).
Wikidata: A free collaborative knowledgebase.
\textit{Commun.\ ACM}, 57(10), 78--85.

\bibitem{vogt2024}
Vogt, L., Kuhn, T., \& Hoehndorf, R. (2024).
Semantic units: Organising knowledge graphs into semantically meaningful units
of representation.
\textit{Journal of Biomedical Semantics}, 15, 7.

\end{thebibliography}
\end{document}